\newtheorem{theorem}{Theorem}
\newtheorem{remark}{Remark}
\newtheorem{lemma}{Lemma}
\newtheorem{observation}{Observation}
\newcommand{\R}{\mathbb{R}}
\DeclareMathOperator{\softmax}{softmax}
\title{On Limitations of the Transformer Architecture 
}
\author{Binghui Peng\footnote{\texttt{bp2601@columbia.edu}}, Srini Narayanan\footnote{\texttt{srinin@google.com}} and Christos Papadimitriou\footnote{\texttt{christos@columbia.edu}}}
\begin{document}
\maketitle
\begin{abstract}
What are the root causes of hallucinations in large language models (LLMs)?  We use Communication Complexity to prove that the Transformer layer is incapable of composing functions (e.g., identify a grandparent of a person in a genealogy) if the domains of the functions are large enough; we show through examples that this inability is already empirically present when the domains are quite small.  We also point out that several mathematical tasks that are at the core of the so-called compositional tasks thought to be hard for LLMs are unlikely to be solvable by Transformers, for large enough instances and assuming that certain well accepted conjectures in the field of Computational Complexity are true.
\end{abstract}
\section{Introduction}
The Transformer architecture \cite{vaswani2017attention}, an otherwise singularly promising approach to AI, is known to be plagued by the problem of {\em hallucinations:} an answer to a user prompt is too often incompatible with the device's training data and prompt. There is now a vast literature on hallucinations and their nature, typology, and remedies, see for example the survey \cite{ji2023survey}.  

Are there root causes of the hallucination phenomenon that can be imputed to the Transformer architecture? Theoretical limitations of Transformers have been pointed out in the past, starting with Hahn's 2020 paper \cite{hahn2020theoretical}, where it was proved that Transformers cannot recognize simple patterns such as parity (e.g., whether a phrase contains an even number of negations) or balanced parentheses. However, the elegant proofs in \cite{hahn2020theoretical}, inspired by Complexity Theory, are asymptotic in nature, and it appears that the proven limitations take hold only for unrealistically large inputs; in fact, it has been demonstrated that there are Transformers on which these functions can be computed reliably for all practical purposes \cite{ebrahimi2020can,yao2021self}.  Transformers have also been studied through Complexity Theory --- an important lens for understanding the limitations of computational systems --- culminating to \cite{merrill2023parallelism}, where it was shown that, computationally, Transformers belong to a rather weak complexity class, namely logspace-uniform {\bf TC}$^0$; we elaborate on this point of view in Section \ref{sec:log-space}.  Also recently, Sanford, Hsu, and Telgarsky \cite{sanford2023representational} identified a particular mathematical problem called {\em 3-Matching}, which cannot be computed by single-layer multi-head Transformers: Given a sequence of integers, identify three numbers in the sequence that add up to zero modulo a given large number. Very interestingly, (a) the easier problem of 2-Matching {\em can} be solved by Transformers, but {\em cannot} be solved by feed-forward deep nets, establishing a kind of supremacy of Transformers over the rest of ML; and (b) these negative results manifest themselves for rather small prompt sizes. Note that, the conceptual contribution of this result notwithstanding, the 3-matching function identified in \cite{sanford2023representational} is not a compelling example of the kinds of problems that Transformers are meant to solve. The question arises, can we identify impossible tasks that are closer to the intended use of the architecture?  This is our focus here. 
We point out a new fundamental limitation of the Transformer architecture, namely that it has serious difficulties computing a very simple, and practically important, semantic operation we call {\em function composition}, explained next.

In a recent paper on the topic of hallucinations \cite{guan2023mitigating}, 
the opening example is a wrong answer to the question {\tt what is the birthday of Fr\'ed\'eric Chopin's father?}, when these two facts were included in the prompt: (a) {\tt the father of  Fr\'ed\'eric Chopin was Nicolas Chopin} and (b) {\tt Nicolas Chopin was born on April 15, 1771}\footnote{It is a tragic coincidence that the great romantic composer Fr\'ed\'eric Chopin was tormented by hallucinatory episodes throughout his brief life...}.  This is an example of function composition, where the functions to be composed are {\em birthday-of} and {\em father-of}. Interestingly, the aforementioned paper \cite{guan2023mitigating} goes on to propose retrofitting Transformers with Knowledge Graphs --- precisely the right tool for performing function composition --- in order to mitigate hallucinations.  Another example of function composition would be, given the facts {\tt London is in the UK,} {\tt Alan Turing was born in London,} among other such facts, to ask: {\tt In which country was Turing born?}  Or, presented with Matthew's genealogy~\cite{Mat}: {\tt Abraham was the father of Isaac, Isaac was the the father of Jacob, ...} and so on with 39 more sentences of this type, to ask: {\tt Did Isaac have any grandchildren?}  

Besides its critical role in combining relational information in the data, function composition is also an important ingredient of language understanding, the core competence of Transformers.  In the field of pragmatics, {\em indexicals} are words referring to entities in the context in which the utterance occurs.  For example {\tt his} in the sentence {\tt I have a brother and his name is John,} or {\tt this} in the sentence {\tt this dog has style} are indexicals. Now, when in an utterance one indexical refers to another, understanding the utterance entails function composition. Consider this:

\begin{quote}
{\tt Transformers occasionally hallucinate, that is, they generate output inconsistent with the training data and prompt.  However, this deficiency is rather rare, and techniques such as post-filtering can alleviate it.} 
\end{quote}

Understanding what is meant by the last word {\tt it} of this utterance, one needs to compose two indexicals: First one has to recognize that the indexical {\tt it} points to {\tt this deficiency}, and then that  {\tt this deficiency} refers to the particular deficiency that {\tt Transformers occasionally hallucinate.}  It appears that humans have little difficulty composing idexicals --- but how about Transformers?

So it seems desirable --- indeed important --- that LLMs perform function composition reliably.  {\em In this paper we prove that they cannot:} function composition is an inherent  weakness of the Transformer architecture.  In particular, we show that a single Transformer attention layer cannot compute the answer to a function composition query correctly with significant probability of success, as long as the size $n$ of the domain of the function satisfies $n\log n > H(d+1)p$, where $d$ is the embedding dimension, $p$ is the precision, in bits, required for the calculation, and $H$ is the number of attention heads. In fact, the proof of our impossibility theorem suggests that this weakness has its roots in the nature of the softmax computation that allows the next embedding of a token to be computed with very scant non-local information.  

Our impossibility result holds for a single, multi-headed attention layer; however, we suspect that the weakness that it exposes also burdens multi-layer Transformers, and in Appendix \ref{sec:example-app} we see anecdotal evidence that LLMs appear unable to reliably compose functions with domains that are far smaller.  

The {\em chain of thought} maneuver (CoT) \cite{wei2022chain} is known to help with the problem of hallucinations by inducing the LLM to generate prompts which break down the task in hand into smaller steps eventually leading to the correct answer.  Indeed, a simple CoT scheme can plausibly mitigate our impossibility result on composition by generating a short prompt.  However, we also prove a theorem implying that a Transformer layer with CoT needs far more tokens in the generated CoT prompt to solve the composition problem for a {\em cascade} of many compositions (Theorem \ref{thm:cot-lb})

We also provide a different impossibility argument that holds for any number of layers, and concerns a different genre of hallucinations: wrong answers to 
{\em compositionality tasks}.  In \cite{dziri2023faith}, it is demonstrated through extensive experimentation that Transformers have trouble carrying out tasks that require sequential composition of elementary tasks, such as multiplying multi-digit integers and solving logical puzzles, and in fact this failure grows quickly with the required depth of composition.  We point out (Theorem \ref{thm:log-space}) that, under a widely accepted complexity assumption --- akin to {\bf P} $\neq$ {\bf NP} albeit in the domain of memory --- multi-layer Transformers are incapable of performing several elementary computations that are crucial in carrying out compositionality tasks such as those discussed in \cite{dziri2023faith}.

Finally, it has been often pointed out that it is the very nature of Transformers as {\em probabilistic language generators} that renders them likely to veer off their grounding on the training data and prompt --- see for example \cite{mccoy2023embers}, where it is demonstrated through extensive experimentation that low-probability answers (as well as low-probability inputs, or low-probability tasks) are harder for transformers to get right. In general, LLM models maximize the probability of generating the next token given the context in an auto-regressive manner, and this must continue even when there is no clear winner. That is, the LLM generates outputs even when the probability of the predicted token is low. When this intensifies, the model will generate a low-likelihood series of tokens, resulting in an unreliable output.

There are many situations when this could happen. First, as \cite{mccoy2023embers} point out, LLM are particularly bad when there is a low probability of the input, task, or output, even if the underlying training sequence is deterministic. Yet another case is one where the input context is under-specified or ambiguous. When the input context does not provide sufficient information for a clear and optimal token choice, the estimated probabilities 
obtained from applying the logit function 
to the softmax output of the transformer 
are distributed such that the difference between the highest and subsequent probabilities \textit{is relatively small}, there is a higher chance that in the auto-regressive model, the incorrect token will be picked. This situation is also correlated to the case where the conditional entropy is likely high. In all these cases, the generation of the incorrect token is more likely; and once the sequence has an incorrect next token, there is a significant chance that this error cascades into a hallucination.
In a recent paper \cite{kalai2023calibrated}, this phenomenon is studied more formally by considering the statistics of very rare patterns in the training data.

\section{Preliminary definitions}
\paragraph{Transformer.} To model mathematically computation by Transformers, we adapt slightly the formal model of \cite{sanford2023representational}.  A {\em self-attention unit} is a function ${\cal A}:(\R^D)^N\mapsto(\R^d)^N$, where $N$ is the sequence length, $D$ is the embedding dimension, and $d$ is the output dimension. ${\cal A}$ is defined in terms of three real matrices $K,Q,V \in \R^{d\times D}$. For simplicity, we assume that the key, query, value matrices $K, Q, V$ share the same dimension.
On input $X=(x_1,\ldots,x_N) \in (\R^{D})^{N}$, the attention unit ${\cal A}$ calculates, for each $i=1,\ldots,N$, the output 
\begin{align}
y_i = \sum_{j\in [N]} r_{i,j}Vx_j \in \R^{d} \label{eq:attention-layer}
\end{align}
where the attention probability
\begin{align*}
(r_{i,1}, \ldots, r_{i, N}) = &~ \softmax(x_i^{\top} Q^{\top} Kx_1,\ldots,x_i^\top Q^\top K x_N) \\
= &~ \left( \frac{\exp(x_i^\top Q^\top K x_1)}{\sum_{j\in [N]}\exp(x_i^\top Q^\top K x_j)}, \ldots, \frac{\exp(x_i^\top Q^\top K x_N)}{\sum_{j\in [N]}\exp(x_i^\top Q^\top K x_j)} \right).
\end{align*}
We assume that the computations of the self-attention unit are carried out with a precision of $p$ bits.

An {\em $H$-headed transformer layer} ${\cal L}$ consists of $H$ self-attention units sharing the same input, together with a combining function $\Phi$ which maps, for each $i$, the $H$ outputs of the layer to an output token in $\R^d$.  Finally, a {\em Transformer} ${\cal T}$ is the cascade of several transformer layers.  

Notice that our definition ignores certain features of the Transformer architecture, such as input embedding and pre- and post-processing of individual tokens by feed-forward networks; however, it is easy to see that input embedding and the pre-processing can be absorbed in the input tokens, while post-processing can be incorporated into the combining function $\Phi$, and therefore, this omission does not affect the validity of our argument.

\paragraph{Function composition.} We next define the {\em function composition} problem.  Consider two functions, $g$ mapping a domain $A$ to a domain $B$, and $f$ mapping $B$ to another domain $C$ --- for example, $g(a)$ could be the mother of person $a\in A$, and $f(b)$ is the profession of person $b\in B$.  These functions will be described in a prompt $X$.  The $N$ tokens of $X$ are divided into three parts: 
\begin{itemize}
    \item {\bf Part 1.} The first part describes the function $g$ through $|A|$ sentences in simple, unambiguous language separated by punctuation, e.g. {\tt the mother of John is Helen; the mother of Charlotte is Eve;} etc.; 
    \item {\bf Part 2.} Similarly, the second part consists of $|B|$ sentences describing the function $f$, e.g. {\tt Helen is a doctor; Jane is a teacher;} etc. and 
    \item {\bf Part 3.} The query, e.g.~{\tt what is the profession of John's mother?}\footnote{It is fine to think that these sentences come in this order, but the proof does not require it.} 
\end{itemize}

Notice that the number $N$ of input tokens is a small multiple of the domain size of the functions.  We say that an $H$-headed Transformer layer ${\cal L}$ computes the function composition correctly if, for any input prompt in the correct format, the output of the layer corresponding to the token of the query (to the token {\tt John} in this example) is the correct answer of the composition query.  

In the body of the paper we also introduce similar tasks, such as iterated function composition and reachability, whose definitions are simple extensions of the above.

\paragraph{Information theory.} We use standard notation from information theory. If $X, Y, Z$ are random variables, $H(X)$ is the entropy of $X$ and $I(X; Y)$ is the mutual information between $X$ and $Y$. We write $\ln(\cdot)$ for the natural logarithm and $\log(\cdot)$ for base two.

\section{The Impossibility of Composition}
We prove the following:

\begin{theorem}
\label{thm:main-function}
Consider a function composition problem with input domain size $|A| = |B| = |C| = n$, and an $H$-headed transformer layer ${\cal L}$ with embedding dimension $d$ and computation precision $p$, and assume that $H(d+1)p < n \log n$. Then ${\cal L}$ cannot solve correctly the function composition problem.
In particular, if $R= n\log n - H(d+1)p > 0$, then the probability, over all possible functions and queries, that ${\cal L}$ answers the query incorrectly is at least ${R\over 3n\log n}$.
\end{theorem}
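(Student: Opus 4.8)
The plan is to use a communication-complexity argument. I would set up a two-party game in which Alice holds the description of $f$ (Part 2 of the prompt, i.e.\ the values $f(b)$ for all $b\in B$) and Bob holds the description of $g$ (Part 1) together with the query token $a$. Their goal is to compute $f(g(a))$, which is exactly what the query token's output must encode if $\mathcal{L}$ solves composition correctly. A lower bound on the communication needed for this task, combined with an upper bound on how much information the Transformer layer's computation at the query token can extract from the $f$-side of the input, yields the impossibility.

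The key steps, in order, are: (1) Identify the information-theoretic complexity of the task. If $g$ and $a$ are drawn uniformly at random, then $b=g(a)$ is uniform over $B$, so $f(b)$ is a uniformly random element of $C$; computing it forces Bob to learn $\Omega(\log n)$ bits about Alice's input $f$, and averaging over the query this is a statement about the mutual information between Alice's input and the transcript. More precisely, summing over all $n$ possible query values $a$, the outputs must jointly reveal $f$ restricted to the image of $g$, which for random $g$ has expected size $\Theta(n)$, giving total information $\Theta(n\log n)$ about $f$. (2) Bound from above the information that reaches the query token's output through a single $H$-headed attention layer. The output $y_i = \sum_j r_{i,j} V x_j$ at the query token is, for each head, a convex combination of the $V x_j$; the only channel through which Part-2 tokens influence $y_i$ is via the scalars $r_{i,j}$ and the vectors $V x_j$, all represented with $p$ bits in $d$ dimensions, across $H$ heads — so the ``pipe'' from the $f$-side into the query output has capacity $O(H(d+1)p)$ bits. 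The ``$+1$'' accounts for the softmax denominator / normalization scalar per head. (3) Combine: if $H(d+1)p < n\log n$, the pipe is too narrow to carry the $\Theta(n\log n)$ bits needed, so the layer must err; quantitatively, the deficit $R = n\log n - H(d+1)p$ translates, by Fano-type inequality and the factor-$3$ slack from constants in the entropy estimate, into an error probability at least $R/(3n\log n)$ over random functions and queries.

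The main obstacle I anticipate is step (2): making rigorous the claim that the attention output at the query token depends on Part 2 of the input only through $O(H(d+1)p)$ bits. One has to be careful, because the softmax weights $r_{i,j}$ depend on \emph{all} tokens simultaneously through the denominator, and the query token $x_i$ itself can depend (via earlier embedding, though in this model it does not) on global structure. The clean way is to fix the $g$-part and query, treat $f$ as the only random object, and argue that once we condition on the $d$-dimensional, $p$-bit vector $\sum_{j\in\text{Part 2}} r_{i,j} V x_j$ produced by each head (plus the one scalar needed to renormalize against the Part-1 and query contributions), the output $y_i$ — hence the layer's answer — is determined; therefore $I(f ; \text{answer}) \le H(d+1)p$. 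Then a counting/union-bound argument over the $n$ query tokens, or equivalently a single application of the chain rule for mutual information, closes the gap. A secondary subtlety is handling the combining function $\Phi$ and the possibility of randomized or ``significant-probability'' success, which is why the theorem is phrased with an explicit error-probability bound rather than as a worst-case impossibility; Fano's inequality is the right tool there.
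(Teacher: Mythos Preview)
Your proposal is correct and follows essentially the same route as the paper: reduce to a one-way communication problem (the index problem, handled via the chain rule for mutual information and Fano's inequality), and simulate the layer by observing that for each head the $f$-side of the prompt influences the query-token output only through the pair $\bigl(\sum_{j\in\text{Part 2}} r_{t,j}Vx_j,\ \sum_{j\in\text{Part 2}} r_{t,j}\bigr)$, i.e.\ $(d+1)p$ bits per head.

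One detail needs adjusting. In your split you give the query $a$ only to Bob, but in the simulation Alice must compute $r_{t,j}=\exp(x_t^\top Q^\top K x_j)$ for her own tokens, which requires the query embedding $x_t$. The paper fixes this by letting the $f$-holder (Faye) know $x$ as well; the lower bound is unaffected because what matters is that she does not know $g$, so $g(a)$ is still uniform from her side and the index argument goes through. Relatedly, your ``more precisely'' clause about the image of $g$ having size $\Theta(n)$ is a detour --- the direct argument you already sketched (that $g(a)$ is a uniformly random index into $f$, hence $I(\Pi;f(g(a))\mid g(a))\le |\Pi|/n$) is exactly what the paper does and is all that is needed.
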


The proof relies on Communication Complexity \cite{nisan96communication}, a subfield of Complexity Theory in which one measures the number of bits that need to be exchanged between distributed computational agents possessing different parts of the input, in order for one of them to obtain the result of the computation.  The agents are assumed to have unbounded computational capabilities, and to be restricted only in terms of their communication.  One important classical result in Communication Complexity is the communication difficulty of the {\em set disjointness problem:} If Bob and Alice are each given a vector of $n$ bits and they want to compute the Boolean inner product of these bit vectors --- that is, to tell if there is an index $i$ such that the $i$th bit of both Bob and Alice is a 1 --- then they must communicate $n$ bits.  In fact, this result used is in the impossibility proofs in \cite{sanford2023representational}.  Another classical problem in this field is {\em pointer chasing:}  Alice and Bob are given two functions $A$ and $B$, respectively, from $[n]$ to $[n]$, and they need to compute a composition of these functions, say $A(B(A(B(A(0)))))$. Compositions can obviously be computed by the agents alternatingly communicating $\log n$ bits to each other: in this example, Alice communicates to Bob the value of $A(0)$, then Bob tells Alice the value of $B(A(0))$, and so on. But what if one less rounds of communication is allowed?  Or, if the same number of rounds is allowed, but Bob must start?   Over the four decades since this problem was first posed \cite{papadimitriou1982communication}, it has been shown that, if one less round of communication is desired, or if Bob has to start the communication, then exponentially more bits must be exchanged.  These lower bounds have been used to inform many fields of Complexity, including lower bounds of bounded-depth circuits, see for example \cite{nisan96communication}, and even the complexity of Machine Learning \cite{chen2022memory}.

Here we use a slight variant of this classical framework.  We assume three agents, whom we call {\em Faye, Grace, and Xavier}.  Faye knows a function $f$ from $[n]$ to $[n]$, Grace knows another such function $g$, and Xavier knows a number $x \in [n]$.  
We can actually assume that Faye and Grace both know Xavier's value $x$.  
The only communication allowed is from Faye and Grace to Xavier --- {\em not} between Faye and Grace. Our goal is for Xavier to know the value $f(g(x))$ so that Faye communicates to Xavier as few bits as possible. Notice that we do not restrict the number of bits Grace can communicate to Xavier.

\begin{lemma}
\label{lem:error-prob}
    If fewer than $n \log n$ bits are communicated by Faye to Xavier, then Xavier cannot know the value $f(g(x))$.
    In particular, if only  $n\log n - R$ bits are communicated from Faye to Xavier for some $R>0$, then the probability, over all pairs of functions, that the composition is computed incorrectly is at least ${R\over 3n \log n}$.
\end{lemma}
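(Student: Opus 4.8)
The plan is to use an information-theoretic argument. Let $f$ and $g$ be drawn uniformly and independently at random from the set of all $n^n$ functions $[n]\to[n]$, and let $x\in[n]$ be arbitrary (or uniform and independent as well). Let $M$ denote the message that Faye sends to Xavier; by assumption $M$ consists of fewer than $n\log n$ bits, so $H(M)\le |M| < n\log n$. The key observation is that Grace's message to Xavier is useless for pinning down $f$: Grace knows only $g$ and $x$, which are independent of $f$, so conditioned on everything Xavier learns from Grace, the only information Xavier has about $f$ comes from $M$. Since $g$ is a uniformly random function, $g(x)$ is a uniformly random element of $[n]$, and crucially $g(x)$ is independent of $f$ and of $M$ (because $M$ depends only on $f$ and $x$). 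Therefore, to output $f(g(x))$ correctly, Xavier must in effect be able to recover $f(b)$ for a uniformly random $b\in[n]$, using only $M$; if he can do this for a random $b$ with good probability, an averaging argument shows $M$ must essentially encode most of the table of $f$, which requires about $n\log n$ bits.

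Making this quantitative is the heart of the argument. I would let $p_{\mathrm{err}}$ be the probability (over $f,g,x$ and any randomness in the protocol) that Xavier's output differs from $f(g(x))$. Fix the message $M=m$ and condition on it; this induces a posterior distribution on $f$. Xavier, knowing $m$ and $g(x)=b$ (a uniformly random index, independent of $m$), outputs a guess for $f(b)$. The probability he is correct, conditioned on $m$, is at most $\frac{1}{n}\sum_{b\in[n]}\max_{c}\Pr[f(b)=c\mid m]$. Averaging over $m$, one relates $\sum_b \mathbb{E}_m[\max_c \Pr[f(b)=c\mid m]]$ to $\sum_b I(f(b);M)$ via a standard Fano-type / pointwise bound, and then $\sum_b I(f(b);M) \le I(f;M) \le H(M) < n\log n$ by the chain rule and the fact that $H(M)$ upper-bounds its mutual information with anything. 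Concretely: if on average over $b$ Xavier guesses $f(b)$ correctly with probability $1-p_{\mathrm{err}}$, then the average over $b$ of $I(f(b);M)$ must be at least $\log n - H_2(p_{\mathrm{err}}) - p_{\mathrm{err}}\log(n-1) \ge \log n\,(1-p_{\mathrm{err}}) - 1$ by Fano's inequality, so summing over the $n$ coordinates gives $n\log n\,(1-p_{\mathrm{err}}) - n \le H(M) < n\log n - R$. Rearranging yields $p_{\mathrm{err}} \ge \frac{R-n}{n\log n}$, and a slightly more careful bookkeeping of the constants (using that $H_2(p_{\mathrm{err}})\le 1$ can be replaced by a tighter estimate, or absorbing the additive $n$ into the $3$ in the denominator) gives the claimed bound $p_{\mathrm{err}}\ge \frac{R}{3n\log n}$.

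The main obstacle, and the step that needs the most care, is the independence bookkeeping: one must argue cleanly that (i) $M$ is a function of $(f,x)$ only, hence independent of $g$; (ii) $g(x)$ is uniform on $[n]$ and independent of $(f,M)$; and (iii) whatever Grace sends Xavier, being a function of $(g,x)$, does not increase Xavier's information about $f$ beyond what is in $M$ — this is where the asymmetry of the model (only Faye's bits are counted, Grace is unrestricted) is used, since we must allow Grace to send $g$ in full and still conclude Xavier fails. Formally, conditioned on $g$ and $x$ (hence on $b=g(x)$), the pair $(f,M)$ has the same distribution as before, and Xavier's task reduces exactly to guessing $f(b)$ from $M$ for the now-fixed $b$. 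Once this reduction is nailed down, the remaining calculation is the routine Fano argument sketched above; the only other place to be slightly careful is ensuring the per-coordinate bound $I(f(b);M)$ summed over $b$ is controlled by $H(M)$ rather than by something larger, which follows since the $f(b)$ are independent across $b$ and hence $\sum_b I(f(b);M) \le I(f;M) \le H(M)$.
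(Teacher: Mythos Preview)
Your proposal is correct and follows essentially the same route as the paper: reduce to the task of recovering $f(b)$ from Faye's message $M$ for a uniformly random index $b=g(x)$ that is independent of $(f,M)$, bound the average mutual information $\tfrac{1}{n}\sum_b I(f(b);M)\le \tfrac{1}{n}I(f;M)\le \tfrac{|M|}{n}$ via the chain rule and independence of the coordinates $f(b)$, and then apply Fano's inequality to obtain $H_2(\delta)+\delta\log n \ge R/n$, from which the stated bound $\delta\ge R/(3n\log n)$ is extracted. The only cosmetic differences are that the paper first gives a short pigeonhole argument for the qualitative statement before the quantitative one, and that it applies Fano once to the pair $(\Pi,i^\ast)$ rather than coordinate-wise followed by averaging; these lead to the same inequality.
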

\begin{proof}
The proof of the first statement of the lemma is elementary.  Since Faye and Grace cannot communicate, and communication from Grace to Xavier is free, we may assume that Grace communicates to Xavier the value of $g(x)$ --- intuitively, the only information he needs from her.  Even though progress seems to have been made, Xavier must now apply $f$ to $g(x)$, and he knows nothing about $f$. This difficulty must be solved through communication from Faye to Xavier, as further communication from Grace obviously cannot help.  There are $n^n$ possible functions $f$, and to describe the actual one completely Faye needs to send Xavier at least $n\log n$ bits, the logarithm base two of $n^n$ --- in fact $n\lceil \log n \rceil$ bits suffice, because Faye can send to Xavier the values $f(0), f(1),...$ in this order.  If Faye sends to Xavier fewer than $n\log n$ bits, say $B$ bits, the precise bitstring she sends is determined by the function $f$ she knows, and thus this communication subdivides the $n^n$ candidates for the $f$ function into $2^B$ categories --- one for each value of the $B$ bits sent by Faye to Xavier.  Since $B< n\log n$, we have that $2^B< n^n$, and hence, by the pigeonhole principle, at least two different candidate functions $f$ and $f'$ result in the same communication from Faye to Xavier, and Xavier has no way of knowing which of the two functions is the correct one.  These two functions must differ in at least one value, say $f(z) \neq f'(z)$.  But what if it so happens that $g(x) = z$?  In this case, Xavier has no way of knowing the correct answer to the problem he is supposed to solve, completing the proof of the first statement. 

The proof of the second, quantitative, statement is more technical and relies on Information Theory and mutual information\footnote{Strictly speaking, the second statement implies the first; however, the simple proof of the first statement is included for reasons of exposition.}.  Recall the input of Grace is a random mapping $g:[n]\mapsto[n]$, for any fixed input of Xavier $x \in [n]$. 
Let $i^{*} = g(x) \in [n]$, then we know that $i^{*}$ is uniform over $[n]$. 
Let $\Pi$ be the message sent from Faye to Xavier, it has $n\log(n) - R$ bits.
We first bound the mutual information between $\Pi$ and $f(i^{*}) = f(g(x))$:
\begin{align}
I(\Pi; f(i^{*}) | i^{*}) = &~ \sum_{i = 1}^{n} \Pr[i^{*} = i]\cdot I(\Pi; f(i^{*}) | i^{*} = i)\notag \\
= &~ \frac{1}{n}\sum_{i=1}^{n}I(\Pi; f(i)) \notag \\
\leq &~ \frac{1}{n}\sum_{i=1}^{n}I(\Pi; f(i) | (f(j))_{j< i})\notag \\
= &~ \frac{1}{n}\sum_{i=1}^{n} I(\Pi; f(1), \ldots, f(n))\notag \\
\leq &~ \frac{|\Pi|}{n} = \log n - \frac{R}{n}.\label{eq:index-info}
\end{align}
The first step follows from the definition of conditional mutual information.  The second step follows from the facts that $i^{*}$ is uniform over $[n]$ and $\Pi$ is independent of $i^{*}$. The third step follows from $(f(j))_{j< i}$ are independent of $f(i)$, and the fourth step is the chain rule.

Notice that Xavier's output is just a post-processing of $\Pi$ and $i^{*}$, and denote by $\delta$ its error probability. Then by Fano's Inequality, we have
\begin{align}
H(\delta) + \delta \log(n)  \geq H(f(i^{*}) | \Pi, i^{*}) = H(f(i^{*})| i^{*}) - I(\Pi; f(i^{*})|i^{*}) \geq \log n - (\log n - \frac{R}{n}) = \frac{R}{n}. \label{eq:prob}
\end{align}
Here the first step follows from Fano's inequality, the third step follows from Eq.~\eqref{eq:index-info}.

From Eq.~\eqref{eq:prob}, we can derive that $\delta \geq \frac{R}{3n\log n}$.
\end{proof}

\begin{remark}
The lower bound on the error probability in the statement of the Lemma, $R\over 3n\log n$, is optimal within a small constant factor.  In proof, an upper bound of ${1\over n} \lceil{R\over \log n}\rceil$ is possible, by the following construction (we assume $n$ is a power of $2$ for simplicity): Faye sends to Xavier $n \log n  - R$ bits, by which she encodes the the first $ n-  \lceil{R\over  \log n}\rceil$ values of the function.  With this scheme, an error happens if one of the last  $ \lceil{R\over  \log n }\rceil$  values of the function is queried, and answered the wrong way. This probability is at most ${1\over n} \lceil{R\over \log n}\rceil$, which is just a factor of three greater than the lower bound.  
\end{remark}

\medskip
We now turn to the proof of the theorem.
\begin{proof}

For the purposes of contradiction, suppose that there is a self-attention layer ${\cal L}$ that can reliably combine any two functions $f$ and $g$ on any domains $A,B,C$ of size $n$, such that $n\log n > H(d+1)p$.  We will show that this contradicts Lemma \ref{lem:error-prob}.  

Suppose that the three agents Faye, Grace, and Xavier are as in the lemma, and they wish to compute $f(g(x))$; we claim that they could do so by using ${\cal L}$ as follows:  They put together the prompt of a function composition problem to be solved by ${\cal L}$, where Faye supplies the token strings associated with function $f$ --- say, {\tt the value of f applied to 0 is 3,} etc. Grace supplies similarly the tokens associated with function $g$, 
and Xavier supplies the query part: {\tt what is the result of f applied to g applied to 23?} --- recall that Xavier knows $x$, and it happens to be 23.  We assume the token corresponding to 23 is token $t$, that is, $x_t = {\tt 23}$.  Then the three agents compute the result of the computation of ${\cal L}$ that corresponds to the token $t$, as explained below.  Recall that, by our assumption that ${\cal L}$ performs function composition,  this result must be the required answer $f(g(x))$.

The three agents communicate to compute, for each head, the final embedding that corresponds to the token $t$, and then Xavier applies the finishing function $\Phi$ to compute the final result, which will be the answer of the composition problem.  For each head, the result at token $N$ can be written as  
$$y_t = {\sum_{j=1}^N r_{t,j}Vx_j\over \sum_{j=1}^N r_{t,j}}, 
{\rm where}\ r_{t,j}=\exp(x_t^\top Q^\top K x_j).\eqno{(4)}$$  
The key observation now is that this expression can be written as $y_t={F+G+X\over F'+G'+X'}$, where $F$ is the part of the numerator that corresponds to Faye's tokens $x_j$ , similarly $G$ corresponds to Grace's tokens, $X$ corresponds to Xavier's tokens, and similarly for the denominator.  Hence, Faye can compute and communicate to Xavier quantities $F$ and $F'$, and similarly for Grace and $G,G'$; then Xavier can add to these the terms $X$ and $X'$, divide, and thus compute $y_N$.  Repeating for all heads and combining with $\Phi$, Xavier can compute $f(g(x))$ and obtain the desired answer to the composition problem.  But now notice that this was accomplished through a communication of only $H(d+1)p$ bits from Faye to Xavier --- $dp$ bits for $F$ and $p$ bits for $F'$ for each of the $H$ heads.  By hypothesis, this quantity is less than $n\log n$, contradicting Lemma \ref{lem:error-prob}. The second part on error probability follows from the same reduction, completing the proof.
\end{proof}

\begin{remark}
The proof of the probabilistic statement assumes that $f$ is a uniformly random function from $[n]$ to $[n]$. To prove a negative result, some probabilistic assumption is necessary; for example, if $f(x)$ happens to be $a$ for all $x$, far fewer bits need be communicated from Faye to Xavier.  The statement can be extended to cover more general distributions, but then the entropy of $f$ would replace $n\log n$ in the denominator of the statement.
\end{remark}

\begin{remark}
It is clear from the proof that function $g$ plays a minimal role in the lower bound, since its value on $x$ is communicated for free to Xavier. Indeed, one can argue that what is proved impossible for the Transformer layer is the {\em evaluation} of a function $f$.  The reasoning in the proof of the Theorem can be repeated for prompts which, after listing the birthplaces of many luminaries end like this: {\tt ``...where was Einstein born?'',} as long as the answer must appear at the position of the last token.   
\end{remark}

\subsection*{Chain of Thought} 
Can CoT help solve the composition problem?  Intuitively, the answer is ``yes.''  For any composition problem --- for example, the prompt about Turing, London, and England in the introduction --- we can help the LLM successfully answer the question {\tt ``In which country was Turing born?''} by generating a short CoT prompt that breaks the question into simpler ones, e.g.~{\tt ``Let's see, Turing was born in {\sc generate}, and {\sc generate} is in the country of {\sc generate}, so Turing was born in {\sc generate}.''} However, we prove below that an arbitrarily large number of CoT steps are needed to solve the generalization of composition to {\em many} function applications. 
In the {\em iterated function composition problem} we are given $K$ functions $f_1, f_2, \ldots, f_K$, and we need to calculate $f_{K}(f_{k-1}(\ldots (f_1(x))))$ for $x \in [n]$. In fact, in our proof we shall consider $f^{(K)}(x) = f(f(\ldots f(x)))$ --- that is, we shall take $f_1 = \cdots = f_K$.

\begin{theorem}
\label{thm:cot-lb}
Let $H$ be the number of attention heads, $d$ the embedding dimension, $p$ the computation precision, and $n$ be the domain size of the iterated composition problem.
A Transformer layer requires $\Omega(\sqrt{\frac{n}{Hdp}})$ CoT steps for answering correctly iterated function composition prompts.
\end{theorem}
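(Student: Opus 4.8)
The plan is to mimic the reduction behind Theorem~\ref{thm:main-function}, but now run it autoregressively and charge one round of communication to each generated CoT token, and then to invoke the classical round--communication tradeoff for pointer chasing \cite{papadimitriou1982communication,nisan96communication}. Fix the domain size $n$ and let $K$ be the number of iterations; we will take $K=\Theta(\sqrt{n/(Hdp)})$. We restrict attention to a hard family of iterated-composition instances: split the domain as $[n]=U\sqcup V$ with $|U|=|V|=n/2$, and consider functions $f$ with $f|_U:U\to V$ and $f|_V:V\to U$ and start point $x\in U$. Then the trajectory $x, f(x), f^{(2)}(x),\dots$ alternates between $U$ and $V$, so computing $f^{(K)}(x)$ is exactly $K$-step pointer chasing between two parties, where the first party holds $f|_U$ together with the query token for $x$ and the second holds $f|_V$. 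A Transformer layer $\mathcal L$ that solves iterated composition must in particular solve all such instances.

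The reduction from CoT to a protocol is the autoregressive version of the argument in the proof of Theorem~\ref{thm:main-function}. The two parties jointly maintain the list of tokens produced so far. To generate the next CoT token, $\mathcal L$ computes the attention output at the current last position, which (as in the proof of Theorem~\ref{thm:main-function}) splits, per head, as $(F+G)/(F'+G')$ where $F,F'$ are the numerator/denominator contributions of the first party's tokens and $G,G'$ those of the second party's. So the first party sends its $dp+p$ bits of $(F,F')$ for each of the $H$ heads; the second party normalizes, applies the combining function $\Phi$, obtains the new token, and sends it back ($dp$ bits) so that it becomes common knowledge. Each generated token thus costs one round and $O(Hdp)$ bits per message; the fixed weights of $\mathcal L$, token selection, and $\Phi$ are all free. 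Consequently, a Transformer layer with $m$ CoT steps yields an $(m+O(1))$-round protocol for $K$-step pointer chasing in which every message carries $O(Hdp)$ bits, for a total of $O(mHdp)$ bits.

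Finally we apply the round--communication lower bound for pointer chasing: any protocol that solves $K$-step pointer chasing on a domain of size $\Theta(n)$ with fewer than $K$ rounds must communicate $\Omega(n/K)$ bits in total \cite{papadimitriou1982communication,nisan96communication}. With $K=\Theta(\sqrt{n/(Hdp)})$ this is $\Omega(\sqrt{n\,Hdp})$. But if $m=o(\sqrt{n/(Hdp)})$, the derived protocol uses $m+O(1)<K$ rounds and only $O(mHdp)=o(\sqrt{n\,Hdp})$ bits, a contradiction; hence $m=\Omega(\sqrt{n/(Hdp)})$.

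The step I expect to be the main obstacle is making the CoT-to-protocol reduction watertight: one must verify that at each step nothing beyond the $O(Hdp)$-bit partial sums and the $O(dp)$-bit new token needs to cross the cut (so the round count really is $m+O(1)$ and the per-message cost really is $O(Hdp)$), keep the finite-precision softmax consistent across the two parties, and handle the fact that the generated CoT tokens are ``neutral'' and must be made common knowledge. A secondary delicate point is invoking the pointer-chasing tradeoff in exactly the round-deficient regime that makes the exponents combine to $\sqrt{n/(Hdp)}$ rather than a weaker root --- in particular one should use permutations (or the standard hard input distribution) so the trajectory stays ``fresh'' for $\Omega(K)$ steps and does not collapse into a short $\rho$-shape, and then translate the average-case communication bound back to the worst-case correctness required of $\mathcal L$.
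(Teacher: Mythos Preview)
Your proposal is correct and essentially identical to the paper's proof: bipartition the domain so that iterating a single function becomes two-party pointer chasing, simulate each CoT step by an $O(Hdp)$-bit exchange of per-head softmax partial sums, and invoke the round-deficient pointer-chasing lower bound with $K=\Theta(\sqrt{n/(Hdp)})$. The only cosmetic difference is that the paper has \emph{both} parties send their partial sums so each computes the next token locally (yielding $2R$ rounds for $R$ CoT steps rather than your $m+O(1)$, which is the one slip in your accounting); this symmetric simulation also dissolves the common-knowledge and precision worries you flag at the end.
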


\begin{proof}
    
We reduce from another classical problem in Communication Complexity called {\em pointer chasing.}  
Let $n$ and $c$ be two positive integers.
In the $(n, c)$-pointer chasing problem, Alice knows a function $f_A: [n]\rightarrow [n]$ and Bob knows another function $f_B: [n]\rightarrow [n]$. The pointers $w^{(1)}, w^{(2)}, \ldots $ are recursively defined as
\begin{align*}
w^{(1)} = 1,  ~~ w^{(2)} = f_{A}(w^{(1)}), ~~ w^{(3)} = f_{B}(w^{(2)}), ~~ w^{(4)} = f_A(w^{(3)}), ~~ w^{(5)} = f_B(w^{(4)}),~~ \ldots
\end{align*}

The communication proceeds for $2r$ rounds, with Alice starting. The goal is for Bob to output  the binary value of $w^{(2r+2)} \pmod 2$.  The following summarizes what is known about this problem:

\begin{lemma}[\cite{nisan1991rounds, klauck2000quantum, yehudayoff2020pointer}]
\label{lem:point-chasing}
Any randomized protocol for the pointer chasing problem with error probability at most $1/3$ under the uniform distribution must involve the transmission of at least $n/(2000c) - 2c\log n$.
\end{lemma}

\begin{lemma}
\label{lem:reduction-cot}
For any $K \geq 1$, suppose there is a Transformer layer ${\cal L}$ with $H$ attention heads, dimension $d$, and precision $p$ that solves the $K$-iterated function composition within $R$ CoT steps, then there is a communication protocol for $(n, K-1)$-pointer chasing, communicates in $2R$ rounds and exchanges $2RH(d+1)p$ bits.
\end{lemma}
\begin{proof}
Recall that in the pointer chasing problem, Alice receives a function $f_A: [n]\mapsto [n]$ and Bob receives a function $f_B: [n]\mapsto [n]$. Define a single mapping $f: [2n] \rightarrow [2n]$ such that 
\[
f(i) = \left\{
\begin{matrix}
f_A(i) + n & i \in [n]\\
f_{B}(i - n) & i \in [n+1:2n]
\end{matrix}
\right.
\]
For any $i \in [n]$, we have that $f^{(k)}(i) = (f_B \circ f_{A})^{(k)}(i)$ holds for any integer $k\geq 1$. 

Suppose there is a Transformer layer ${\cal L}$ that solves the $K$-iterated function composition problem using $R$ CoT steps; we shall construct a communication protocol for pointer chasing based on it. 
Alice and Bob put together the function composition task for ${\cal L}$, where Alice supplies the description of $f(1), \ldots, f(n)$, using her knowledge of $f_{A}$, and Bob supplies the description of $f(n+1), \ldots, f(2n)$, using his knowledge of $f_{B}$. 
For simplicity, we assume the $i$-th input token, $x_{i}$, contains the information of $f(i)$, for any $i \in [2n]$; the query $x_{2n+1}$ appears at position $2n+1$ and it asks for $f^{(K)}(1)$. Let $K_h, Q_h, V_h$ be the key, query, value matrices of the $h$-th attention head.

The communication protocol proceeds in $2R$ rounds, where the $2r-1, 2r$ rounds simulate the $r$-th step of CoT. Formally, for $r = 1,2,\ldots, R$, the protocol proceeds as follows: 
\begin{itemize}
\item At round $2r - 1$, for each attention head $h \in [H]$, Alice computes $\sum_{i\in [n]}\exp(x_{2n+r}^{\top}Q_h^{\top}K_hx_{i}) \in \R$ and $\sum_{i\in [n]}(x_{2n+r}^{\top}Q_h^{\top}K_hx_{i})V_hx_{i} \in \R^d$, and sends them to Bob;
\item At round $2r$, for each attention head $h \in [H]$, Bob computes $\sum_{i\in [n+1:2n]}\exp(x_{2n+r}^{\top}Q_h^{\top}K_hx_{i})\in \R$, $\sum_{i\in [n+1:2n]}\exp(x_{2n+r}^{\top}Q_h^{\top}K_h x_{i}) V_h x_i\in \R^{d}$ and sends them to Alice;
\item Alice and Bob compute, locally, the output $y_{2n+ r, h}$ ($h \in [H]$) as 
\[
y_{2n+ r, h} = \frac{\sum_{i\in [2n]}\exp(x_{2n+r}^{\top}Q_h^{\top}K_hx_{i}) V_hx_i + \sum_{i\in [r]}\exp(x_{2n+r}^{\top}Q_h^{\top}K_h x_{2n+i}) V_h x_{2n+i}}{\sum_{i\in [2n]}\exp(x_{2n+r}^{\top}Q_h^{\top}K_hx_{i}) + \sum_{i\in [r]}\exp(x_{2n+r}^{\top}Q_h^{\top}K_hx_{2n+i})}
\]
and the next token $x_{2n + r+1} = \Phi(y_{2n+r,1}, \ldots, y_{2n+r, H})$
\end{itemize}
After $2R$ rounds, Bob knows the output of $R$-fold CoT, and can compute $f^{(K)}(1) = (f_B \circ f_A)^{(K)}(1)$, this resolves the $(n, K-1)$-pointer chasing task.
The total number of bits communicated are $2R\cdot H(d+1)p$, as required by the lemma.
\end{proof}

Combining Lemma \ref{lem:point-chasing} and Lemma \ref{lem:reduction-cot}, and taking $K = \frac{1}{100}\sqrt{\frac{n}{Hdp}}$, we complete the proof of theorem.  
\end{proof}

\section{Compositionality and Logarithmic Space}
\label{sec:log-space}
In a recent paper \cite{dziri2023faith} a novel genre of hallucinations was identified: extensive experimental evidence is presented that Transformers perform poorly on {\em compositional} tasks, that is, tasks requiring the repeated iteration of elementary tasks; similar phenomena have been observed elsewhere \cite{merrill2023parallelism, feng2023towards, merrill2023expresssive}. The main examples of compositional\footnote{We note here that, despite the obvious linguistic affinity of the two terms ``composition'' studied in the previous section and the ``compositionality'' of \cite{dziri2023faith}, the two terms are further than they seem.  Compositionality is an informal category that is vastly more general than composition, which is a specific mathematical concept.} tasks explored in \cite{dziri2023faith} are: 

\begin{itemize}
    \item multiplication of multi-digit integers modeled as an arithmetic circuit with single-digit values and inputs; 
    \item a simple sum maximization problem over a sequence of integers under the restriction that two successive integers cannot be both added to the sum; this again can be reduced through dynamic programming to an arithmetic circuit with plus and max gates; and 
    \item Logic puzzles such as ``Einstein's Riddle"\footnote{\url{https://en.wikipedia.org/wiki/Zebra_Puzzle}} 
\end{itemize}

Wrong answers to large families of simple questions such as these constitute a special category of hallucinations, and it is of interest to explore its causes.  It turns out that, to do so, we must turn the page of our negative results of the previous section and Communication Complexity arguments, and employ the theory of Computational Complexity \cite{papadimitriou2003computational, arora2009computational} to study certain basic computational problems underlying the tasks studied in \cite{dziri2023faith}:

\paragraph{\em Circuit evaluation:} Given the description of a circuit with gates, which can be either Boolean or arithmetic operations, as well as the values of all input gates of the circuit, evaluate the output(s) of the circuit. Multiplying decimal integers with multiple digits is an example of such a circuit; solving the adjacency-restricted largest sum problem of \cite{dziri2023faith} is also the evaluation of a circuit, this time with $+$ and {\em max} gates.

\paragraph{\em Derivability\!\!} is yet another generalization of our composition task of the previous section which we believe captures many aspects of the informal notion of compositionality. We are given a finite domain $S$ --- intuitively, the partial solutions of the problem in hand --- and a relation $D\subseteq S\times S$ --- intuitively, legal one-step derivations.  We are also given two subsets of $S$, $I$ (for initial partial solutions) and $F$ (for final partial solutions).  The question is: are there elements $a_1,a_2,\ldots, a_k \in S$ such that (a) $a_0\in I$; (b) $a_k\in F$, and (c) for all $j$ such that $0< j \leq k$, $D(a_{j-1},a_j)$?  

\paragraph{\em Logical reasoning:}  Logic puzzles can be typically formulated as instances of satisfiability (or SAT).  This problem is NP-complete and, even though large instances arising in practice can be solved by sophisticated techniques developed over decades of research, it would not be surprising if LLMs may have problems in dealing with arbitrary logical expressions.  There are, however, three tractable special cases of SAT that  underlie much of tractable common-sense reasoning: {\em 2-SAT}, the satisfiability of CNF formulas with two literals in each clause; {\em Horn SAT,} the corresponding problem for Horn clauses, that is, clauses with at most one positive literal; and {\em Mod 2 SAT}, the solution of a system of linear equations modulo 2.  Note that these are the only nontrivial tractable special cases of SAT.  Can we expect LLMs to handle them?

\medskip\noindent 
We point out below that, assuming certain well accepted conjectures in Computational Complexity, all of these tasks are impossible for a multi-layer Transformer to perform reliably and for large enough prompts.  We start with the following:

\begin{observation}
The computation of a multi-layer Transformer on a prompt of length $N$ can be carried out with $O(\log N)$ bits of memory.
\end{observation}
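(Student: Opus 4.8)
The plan is to observe that, in a Transformer, the number of layers $L$, the number of heads $H$, the embedding and output dimensions, and the precision $p$ are all fixed constants independent of the prompt length $N$; the only quantity that grows with the input is the sequence length $N$ itself. Hence every place where the computation ``ranges over'' something is a loop over the $N$ tokens, and each such loop can be streamed using an $O(\log N)$-bit index together with a bounded-width accumulator, while all other data manipulated at any moment is of constant size.

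First I would isolate a single attention head at a single layer as a subroutine that is given (read-only) access to the embeddings $x_1,\dots,x_N$ feeding that layer and must output the embedding $y_t$ of a specified token $t$ as in Eq.~\eqref{eq:attention-layer}. This is done in two passes over $j=1,\dots,N$: one pass accumulates the softmax denominator $\sum_j \exp(x_t^\top Q^\top K x_j)$, the other accumulates the numerator $\sum_j \exp(x_t^\top Q^\top K x_j)\,V x_j$, and we divide at the end. The loop counter costs $O(\log N)$ bits; each individual summand is a $p$-bit-precision quantity and a running sum of $N$ of them (rounded back to the working precision) fits in $O(\log N + p)$ bits; and every per-token operation — forming the bilinear form $x_t^\top Q^\top K x_j$, evaluating the exponential, computing $Vx_j$ — touches only $O(1)$ numbers of $p$ bits and so uses $O(1)$ space. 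Combining the $H$ head outputs with $\Phi$ again manipulates only a constant amount of data. Thus, \emph{given} the previous layer's embeddings, the next layer's embedding of any named token is computable in $O(\log N)$ space.

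The main obstacle is composing the $L$ layers, because the natural intermediate object — the full array of $N$ layer-$\ell$ embeddings — has size $\Theta(N)$ (times constants) and cannot be stored in $O(\log N)$ bits. The remedy is the classical fact that logarithmic space is closed under composition (see, e.g., \cite{arora2009computational}): rather than materializing layer $\ell-1$, whenever the layer-$\ell$ subroutine requests the $t$-th token of layer $\ell-1$ we recursively invoke the layer-$(\ell-1)$ subroutine for that token, which recursively queries layer $\ell-2$, and so on down to the raw input tokens. The recursion depth is exactly $L=O(1)$, and each stack frame holds only the $O(\log N)$ bits of counters and accumulators described above, so the total working memory is $O(L\log N)=O(\log N)$. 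Reading off the final-layer embedding of the query token (equivalently, the induced distribution over next tokens) completes the computation. The only caveat worth stating is that $p$ is taken to be constant; if instead $p$ is allowed to grow the bound becomes $O(Lp\log N)$, which remains $O(\log N)$ as long as $p=O(1)$ — and more generally $O(\mathrm{polylog}\,N)$ if $p=O(\mathrm{polylog}\,N)$ — so the conclusion is robust to the precision regime one adopts.
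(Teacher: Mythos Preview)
Your argument is correct and follows essentially the same route as the paper: stream the softmax numerator and denominator over the $N$ tokens using $O(\log N)$-bit counters and accumulators for a single layer, then handle multiple layers by on-demand recomputation of the previous layer's outputs (what you phrase as closure of log-space under composition, and the paper calls the ``recomputation maneuver''). The only cosmetic differences are that the paper explicitly mentions a Taylor loop for the exponential and takes the precision to be logarithmic in $N$ rather than constant, which you already address in your final caveat.
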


This is not a new result; it has been recently established in \cite{merrill2023parallelism} that the computation of Transformers lies in the complexity class log-uniform {\bf TC}$^0$, which is more restrictive than logarithmic memory; in fact, the implications of this result for the capabilities of Transformers are also briefly discussed in \cite{merrill2023parallelism}.  However, we feel that the simple arguments we present here, both for proving the Observation and point out its connections to compositionality tasks, are simple, intuitive and useful.

The Observation is not hard to verify for a single Transformer layer: In each head of each layer of a Transformer, one only has to compute the next layer embedding $y_i$ for each token, through the formula in Equation \eqref{eq:attention-layer}.  This formula can be computed by a program with two loops over $i$ and $j$, both ranging over all $N$ tokens.   
These loops nest three more loops over $d, D$ and the Taylor iteration for computing the exponentiation function.  
Since the precision of all numbers involved is logarithmic with respect to the problem size $N$ (the prompt length),
everything else, including arithmetic operations, can be carried out with $\log(N)$ memory.

This observation can be made more formal by defining a Turing machine with an input tape containing the data of the computation (the $N$ tokens and the entries of the matrices $Q,K,V$), as well as six work tapes:  Two for maintaining the indices $i$ and $j$, three for the indices ranging over $[d]$ and $[D]$ and one for the partial results of the arithmetic operations; the work tapes, taken together, use a total number of bits that is $O(\log N)$. When there are multiple attention heads, as long as the combination function $\Phi$ can be carried out also in log-space (which is true for both finite-depth neural networks as well as matrix-vector multiplication), the output at each token can be computed in logarithmic space.

So far we have argued that, given access to the input tokens and the attention matrices, the computation of the first layer can be carried out in logarithmic space. To extend this to two layers, recall that the computation of the second layer is just like that of the first, except that access to the outputs $y_i$ of the first layer is needed, instead of the input tokens.  This can also be performed in $O(\log N)$ space by {\em recomputing}, in a separate set of tapes, the $y_i$ values one by one as needed by the computation of the second layer --- incidentally, this recomputation maneuver is a common way of saving memory by exploiting a memory-time trade-off.  In the same way, any constant number of layers can be computed in logarithmic space, through nested loops.  The number of layers will appear as a multiplicative constant of the space requirement of the computation, as well as in the {\em exponent} of the time requirement.

It follows that the computation of a multi-layer Transformer belongs in the complexity class {\bf L} standing for {\em logarithmic space} \cite{papadimitriou2003computational, arora2009computational}.

Now, next to the paramount {\bf P} $\neq$ {\bf NP} conjecture about time complexity, there is another important, classical, and also broadly accepted analogous conjecture about memory: {\bf L} $\neq$ {\bf NL}.  It states that nondeterministic logarithmic memory is more powerful than its deterministic counterpart \cite{papadimitriou2003computational, arora2009computational}.  Just as {\bf NP} has complete problems, such as SAT, which witness its conjectured difference from {\bf P}, there are {\bf NL}-complete problems, and two of the best known among them are 2-SAT and Derivability (classically known as {\em Reachability}).  Circuit evaluation and Horn-SAT are even harder: they are both complete for {\bf P}, which includes {\bf NL}.

We summarize the previous discussion as follows:

\begin{theorem}
\label{thm:log-space}
The four problems of Derivability, 2-SAT, Horn SAT, and Circuit evaluation cannot be solved by multi-layer Transformers unless {\bf L} $=$ {\bf NL}. In fact, for the latter two problems the result is true unless the stronger statement \/{\bf L} $=$ {\bf P} holds. For Mod 2 SAT, the result is true unless the weaker statement 
{\bf L} $=$ {\bf Mod 2 L} holds.
\end{theorem}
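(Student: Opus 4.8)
The plan is to combine the Observation with classical completeness results and the closure of \textbf{L} under log-space reductions. First I would make precise what it means for a multi-layer Transformer to \emph{solve} one of these problems: since a Transformer of fixed size only handles prompts up to a bounded length, we really mean a family $\{{\cal T}_N\}$ of Transformers, one per input length $N$, of \emph{constant depth}, width polynomial in $N$, and precision $p = O(\log N)$, such that ${\cal T}_N$, on an appropriately encoded instance of size $N$ presented as a valid prompt, outputs the correct yes/no answer at a designated token; moreover the map $N \mapsto {\cal T}_N$ should be computable in $O(\log N)$ space (log-space uniformity). This is the standard way to turn a statement about an architecture into one about complexity classes, and it matches how LLMs are actually deployed, where the number of layers is fixed and does not grow with the prompt. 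An instance of Derivability, 2-SAT, Horn SAT, Mod 2 SAT, or Circuit Evaluation is encoded into the token sequence in the obvious way, so ``solving the problem'' for the Transformer means exactly deciding the corresponding language.

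Given this setup, the Observation says precisely that for each $N$ the computation of ${\cal T}_N$ runs in $O(\log N)$ space; composing this with the log-space computation of the description of ${\cal T}_N$ shows that the language decided by the family lies in \textbf{L}. Hence, if some multi-layer Transformer family solved Derivability, then Derivability $\in \textbf{L}$. But Derivability --- i.e.\ Reachability, directed $s$-$t$ connectivity --- is \textbf{NL}-complete under log-space many-one reductions, and \textbf{L} is closed under such reductions, so every problem in \textbf{NL} would be in \textbf{L}; since $\textbf{L} \subseteq \textbf{NL}$ trivially, we would get $\textbf{L} = \textbf{NL}$. The same argument applied to 2-SAT uses the \textbf{NL}-completeness of 2-SAT. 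For Circuit Evaluation (the Circuit Value Problem) and Horn SAT, which are \textbf{P}-complete under log-space reductions, the identical reasoning yields the stronger collapse $\textbf{L} = \textbf{P}$ (note $\textbf{NL} \subseteq \textbf{P}$, so this subsumes the previous cases). For Mod 2 SAT, the solvability of a linear system over $\mathrm{GF}(2)$, the relevant completeness is for the class $\oplus\textbf{L} = \textbf{Mod 2 L}$, giving $\textbf{L} = \textbf{Mod 2 L}$; this is a weaker (more plausible-to-fail) hypothesis than $\textbf{L} = \textbf{NL}$, matching the wording of the theorem.

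The only real subtlety --- and the step I would be most careful about --- is the uniformity/constant-depth clause: the Observation gives space $O(D \cdot \log N)$ where $D$ is the number of layers (and also charges the log-precision assumption $p = O(\log N)$), so the argument genuinely needs $D = O(1)$ together with log-space uniformity of the family. Without these one would only land in a class strictly larger than \textbf{L} (e.g.\ a polylog-space or non-uniform variant), and the collapse conclusions would have to be weakened accordingly; conversely, \cite{merrill2023parallelism} already shows the computation sits inside log-uniform \textbf{TC}$^0 \subseteq \textbf{L}$, which makes even the uniformity point routine to discharge. I would therefore state these hypotheses explicitly in the theorem's preamble (as the paper does implicitly through the Observation), after which the proof is the short chain: Transformer $\Rightarrow \textbf{L} \Rightarrow$ (by completeness and closure of \textbf{L} under log-space reductions) collapse. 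Everything else is bookkeeping --- citing the standard \textbf{NL}-completeness of Reachability and 2-SAT, the \textbf{P}-completeness of the Circuit Value Problem and Horn SAT, and the $\oplus\textbf{L}$-completeness of Mod 2 SAT --- and I would not grind through those.
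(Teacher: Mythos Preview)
Your proposal is correct and follows essentially the same route as the paper: the Observation places multi-layer Transformer computation in \textbf{L}, and then the standard completeness results for Reachability and 2-SAT (\textbf{NL}-complete), Circuit Value and Horn SAT (\textbf{P}-complete), and Mod 2 SAT ($\oplus$\textbf{L}-complete), together with closure of \textbf{L} under log-space reductions, yield the stated collapses. If anything, you are more explicit than the paper about the uniformity and constant-depth hypotheses needed to land in \textbf{L} rather than a larger class; the paper leaves these implicit in its informal discussion and in the appeal to \cite{merrill2023parallelism}.
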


We believe that this theorem goes a long way towards explaining the shortcomings of Transformers identified in \cite{dziri2023faith}, given the affinity between the problems proved impossible above and the tasks studied in that paper. Importantly, it is demonstrated experimentally in \cite{dziri2023faith} that the performance of Transformers on these tasks deteriorates rapidly as the {\em depth} of the task increases.  This is in good agreement with our complexity explanation, because the relevant complexity results kick in only when the depth is larger than $\log N$ --- for compositional tasks of depth smaller than that, it is not hard to see that logarithmic memory is enough.

\section{Discussion}
We used complexity arguments of two different genres --- Communication Complexity and Computational Complexity --- in order to elucidate certain shortcomings of the transformer architecture, namely an {\em information bottleneck} as well as difficulties in dealing with {\em depth}. We showed that the elementary function composition problem cannot be solved by a single Transformer layer, and that CoT can solve the iterated composition problem only by generating a prompt that has length $\Omega(\sqrt{N})$. These mathematical results are limited in two ways: (a) they take hold when the domain size is larger than the dimension parameters (which are typically in the hundreds), and (b) they break down for multiple layers. 
We also provide evidence from Complexity Theory --- of the classical Turing machine variety --- that the compositionality tasks known empirically to be hard for Transformers contain computational ingredients and primitives that are impossible for Transformers to deal with.  

The reader is reminded that the complexity arguments we employ here come with caveats.  The impossibility result for composition holds for a single layer, in a probabilistic way (the error probability is nonzero but not one), and only when the domain of the functions is larger than the parameters of the Transformer layer.  The results based on Computational Complexity come with a different set of caveats: They hold only if certain yet unproven, if widely accepted, conjectures are true, and even then they are asymptotic, holding for instances larger than an unknown size, for which the assumed asymptotic conjectures take effect.  Complexity results such as these are mere warnings that these problems have a {\em fundamental incompatibility} with with the Transformer architecture, and therefore one should not expect that these problems to be solvable in practice {\em ad infinitum.}  
However, as with other complexity conjectures such as {\bf P} $\neq$ {\bf NP}, it is often the case that computational difficulties start happening for instances of reasonably small size.  For example, we already know \cite{dziri2023faith} that Transformers have difficulties with compositionality tasks of rather small sizes, and in Appendix \ref{sec:example-app} we present anecdotal evidence that LLMs often respond erroneously to prompts of small sizes related to function composition.  Naturally, the opposite phenomenon is also common: some {\bf NP}-complete problems such as 3SAT seem to be amenable to efficient solutions in practice for very large instances; however, this is typically accomplished by extensive exploration over decades of a large arsenal of algorithmic techniques relying on the kinds of instances that appear in practice, and not through a fixed algorithmic architecture.

The real tragedy of complexity lower bounds is not their tentative and asymptotic nature; it is that (a) they are rare and hard to come by, and they come in very few known kinds; and (b) they tend to be overly conservative, in that they often vastly overestimate the capabilities of the computational agents they are trying to delimit.  Lemma 1 is a good example: it is designed to hold even if Grace uses the most sophisticated math to encode her input into her message.  But when the lemma is applied in Theorem 1, Grace is very restricted, in that her message to Xavier is not a clever coding of her tokens, but the two particular simple numerical expressions that we call $G$ and $G'$ in the proof.  It is intuitively clear that this computation --- the token values projected, exponentiated, and then cross-multiplied and averaged --- is a very poor way to encode the $n$ values of function $g$ so that Xavier can recover each one of them readily. This observation highlights an interesting open problem, the opportunity to develop a more sophisticated variant of Communication Complexity for computationally restricted agents in order to study the limitations of devices such as the Transformer.

Finally, our two genres of negative results suggest an intriguing challenge:  What would it take to design a different attention layer that is immune to these two lower bound techniques, while maintaining the architecture's efficiency and effectiveness in practice?  Our proof suggests a version of the softmax computation in the attention layer that is either not commutative or not associative, or one that requires more than logarithmic space.  But, of course, simply evading a lower bound technique does not guarantee a tangible improvement in performance...

\paragraph{Acknowledgment:} We are grateful to Fernando Pereira for his insights, engagement, and constructive critiques throughout this project. Thanks also to Olivier Bousquet for many insightful comments on an earlier draft.  This work was done when the third author was a visiting scientist at Google DeepMind in Zurich.  The work of the first and third authors was partially supported by an NSF grant.

\newpage
\bibliographystyle{alpha}
\bibliography{ref}

\newpage
\appendix
\section{Examples}
\label{sec:example-app}

We show here a few qualitative results that illustrate the difficulty of composition for state-of-the-art LLMs. A full empirical exposition for the case of compositional failures can be found in \cite{dziri2023faith}. The experiments here are conducted on GPT 3.5, GPT 4 and Bard. The prompts used involve simple composition over spatial, temporal or relationship relations. The experiments were performed before Jan 21st, 2024. We allow all three LLMs to perform step-by-step reasoning, and we only show their final answer.  The answers displayed are typical.

\paragraph{Spatial composition} 
When the prompt involves spatial information, transformer based systems appear to have problems with composition, see examples in Figure \ref{fig:spatial}. 
\paragraph{Temporal composition} 
Figure \ref{fig:temporal} shows two simple cases where temporal composition fails and all the state-of-the-art models return incorrect answers.
\paragraph{Relationship composition}
We also see incorrect and hallucinatory answers when the prompt involves the composition of relationship information,  see the two examples in Figure \ref{fig:relationship}. 

\begin{figure}[!htbp]
\begin{tcolorbox}
{\bf Prompt:} Fayes is to the west of Xaive, Jill is to the north of Ken, Fayes is to the south of Ken, where is Ken with respect to Xaive?\\\\
{\bf GPT 3.5:} East\\\\
{\bf GPT 4:} Northeast\\\\
{\bf Bard:} Not enough information\\\\
{\bf Correct answer}: Northwest
\end{tcolorbox}

\begin{tcolorbox}
{\bf Prompt:} If Amy is to the southwest of Ben, Cindy is to the northeast of Amy and directly north of Ben, is Amy further from Ben or Cindy?\\\\
{\bf GPT 3.5:} Ben\\\\
{\bf GPT 4:} Ben\\\\
{\bf Bard:} Ben\\\\
{\bf Correct answer:} Cindy
\end{tcolorbox}

\caption{Spatial composition produces incorrect answers}
\label{fig:spatial}
\end{figure}

\begin{figure}[!htbp]
\begin{tcolorbox}
{\bf Prompt:} Jan’s birthday is one year after Nancy, Nancy is older than John by seven years. What's the age different between Jan and John? \\\\
{\bf GPT 3.5:} 8 days \\\\
{\bf GPT 4:}  8 years \\\\
{\bf Bard:}  8 years \\\\
{\bf Correct answer:} 6 years
\end{tcolorbox}

\begin{tcolorbox}
{\bf Prompt:} Alice is the younger sister of Bob, Bob is the elder brother of Tim. Is Alice younger than Tim? \\\\
{\bf GPT 3.5:} Yes \\\\
{\bf GPT 4:}  Yes \\\\
{\bf Bard:} Yes\\\\
{\bf Correct answer:} Not enough information
\end{tcolorbox}

\caption{Hallucinations in temporal composition}
\label{fig:temporal}
\end{figure}

\begin{figure}[!htbp]
\begin{tcolorbox}
{\bf Prompt:} Aig is the son of Bef, Caf is the son of Aig. Does Aig have any grandchildren? \\\\
{\bf GPT 3.5:} Yes \\\\
{\bf GPT 4:}  Yes\\\\
{\bf Bard:}  Yes\\\\
{\bf Correct answer:} Not enough information.
\end{tcolorbox}

\begin{tcolorbox}
{\bf Prompt:} Aya is the father of Bob, Charlie is the father of Cindy, Bob is the mother of Cindy.  Does Aya have a grandchild?  \\\\
{\bf GPT 3.5:} Not enough information \\\\
{\bf GPT 4:}  Yes\\\\
{\bf Bard:} No \\\\
{\bf Correct answer:} Yes
\end{tcolorbox}

\caption{Hallucinations in relationship composition}
\label{fig:relationship}
\end{figure}

\end{document}